\newtheorem{theorem}{Theorem}
\newtheorem{lemma}{Lemma}
\newacronym{sbl}{SBL}{sparse Bayesian learning}
\newacronym{fsbl}{F-SBL}{fast SBL}
\newacronym{em}{EM}{expectation-maximization}
\newacronym{ml}{ML}{maximum likelihood}
\newacronym{rvm}{RVM}{relevance vector machine}
\newacronym{ard}{ARD}{automatic relevance detection}
\newacronym{pdf}{PDF}{probability density function}
\newcommand{\ist}{\hspace*{.3mm}}
\newcommand{\iist}{\hspace*{1mm}}
\newcommand{\T}{\text{T}}
\begin{document}
\title{General Pruning Criteria for Fast SBL}

\author{\IEEEauthorblockN{Jakob M\"oderl$^\ast$, Erik Leitinger$^\ast$, Bernard Henri Fleury$^\dagger$} \\
    \IEEEauthorblockA{$^*$Graz University of Technology, Graz, Austria,  \{jakob.moederl, erik.leitinger\}@tugraz.at}\\
    \IEEEauthorblockA{$^\dagger$Technische Universität Wien, Wien, Austria, bernard.fleury@tuwien.ac.at}
    \thanks{We thank the Christian Doppler Research Association, the Austrian Federal Ministry for Digital and Economic Affairs, and the National Foundation for Research, Technology and Developmen for financially supporting this work within the framework of the Christian Doppler Laboratory of Location-Aware Electronic Systems. This research was in part funded by the Austrian Research Promotion Agency (FFG) under the project PRISM (project number: 999923584).}
}

\maketitle

\begin{abstract}
    \Gls{sbl} associates to each weight in the underlying linear model a hyperparameter by assuming that each weight is Gaussian distributed with zero mean and precision (inverse variance) equal to its associated hyperparameter. The method estimates the hyperparameters by marginalizing out the weights and performing (marginalized) \gls{ml} estimation. 
    \Gls{sbl} returns many hyperparameter estimates to diverge to infinity, effectively setting the estimates of the corresponding weights to zero (i.e., pruning the corresponding weights from the model) and thereby yielding a sparse  estimate of the weight vector.
    
    In this letter, we analyze the marginal likelihood as function of a single hyperparameter while keeping the others fixed, when the Gaussian assumptions on the noise samples and the weight distribution that underlies the derivation of \gls{sbl} are weakened.
    We derive sufficient conditions that lead, on the one hand, to finite hyperparameter estimates and, on the other, to infinite ones.
    Finally, we show that in the Gaussian case, the two conditions are complementary and coincide with the pruning condition of \gls{fsbl}, thereby providing additional insights into this algorithm.
\end{abstract}

\glsresetall

\section{Introduction}

Sparse signal reconstruction methods have attracted a lot of attention in the past 20 years \cite{baraniukSPM2007:CS,DonohoTIT2006:CS}.
\Gls{sbl} \cite{faulNIPS2001:AnalysisSBL} is such a method, which has found applications in many domains, e.g., line spectral estimation of acoustic and radio channels \cite{GreLeiWitFle:TWC2024,gerstoftSPL2016:SBLforDoA,moederlFusion2025:multi-dictionary-SBL}, joint channel estimation and decoding \cite{hansenTSP2018:IterativeReceiverSBL}, optical flow estimation \cite{dorazilICASSP2023}, blind deconvolution \cite{tzikasTIP2009:SBL-blind-deconvolution,wipfJMLR2014:blind-deconvolutions}, localization of neural current sources using MEG and EEG sensors \cite{seegerSPM2010:VBayesInference} and multi-pitch estimation of audio signals \cite{moederl2023TSP:structuredLSE}.

\Gls{sbl} computes a sparse estimate of a weight vector in a linear model corrupted by additive noise.
To do so, the prior of each weight is modeled as a zero-mean Gaussian distribution with an unknown precision (i.e., inverse variance) as hyperparameter \cite{faulNIPS2001:AnalysisSBL}.
The hyperparameters are estimated from the data by marginalizing out the weight vector and maximizing the marginal likelihood.
Then, an approximate posterior of the weights is computed from these hyperparameter estimates. 
In practice, iterative techniques, such as the expectation–maximization (EM) algorithm, are used to maximize the marginal likelihood.
However, a drawback of this approach is its slow convergence.
In \cite{faulNIPS2001:AnalysisSBL} the authors present an analysis of the marginal likelihood used in \gls{sbl} when a single hyperparameter is varying, while the others are kept fixed. This study provides a means to check whether the so-obtained section of the marginal likelihood has a single maximum or not.
Based on this analysis, \cite{tipping2003WAIS:FastMarginalSparseBayesian} presents a fast version of \gls{sbl}, coined \gls{fsbl}, that maximizes the marginal likelihood using coordinate ascent.
The maximization typically yields many hyperparameter estimates with infinite values, resulting in an estimated prior for the associated weights with zero variance. In other words, these weights are inferred to be zero with probability one and are thereby effectively ``pruned from the model'', producing a sparse estimate of the weight vector.

Readers interested in learning more about \gls{sbl} are referred to \cite{wipf2011TIP}, which provides a comprehensive treatment of Type-I and Type-II Bayesian estimation for sparse signal reconstruction, particularly highlighting the relationship between the two methods, as \gls{sbl} is rooted in this framework. Also noteworthy is \cite{palmerNIPS2005}, which presents the connection between EM-based implementations of both Type-I and Type-II methods.

\subsection*{Contribution}
In this letter, we analyze the marginal likelihood optimized in \gls{fsbl} with a focus on the pruning condition, i.e., on obtaining either finite or infinite hyperparameter estimates, when the Gaussian assumptions that underlie \gls{sbl} are weakened.
\begin{itemize}	
    \item Under these weaker assumptions, we derive two sufficient conditions leading to \gls{fsbl} to return finite hyperparameter estimates on the one hand and infinite ones on the other.
    
    \item We show that these two conditions are complementary and coincide with the pruning condition of \gls{fsbl} obtained under the Gaussian assumptions, see \cite{faulNIPS2001:AnalysisSBL,shutin2011TSP:fastVSBL,GreLeiWitFle:TWC2024,AmeGomICML2021:SBLStewpwiseRegression}. 

    \item In addition to the theoretical derivations, we provide a graphical interpretation that offers an intuitive perspective on the pruning condition, i.e., on the internal mechanism/functioning of \gls{fsbl}.

\end{itemize}

\section{Signal Model and Sparse Bayesian Learning}
\label{sec:signal-model}

In this Section, we introduce \gls{sbl} in its ``classical'' form, i.e., under the assumption that all involved distribution are Gaussian. However, we formulate the required equations in a general form so that they remain valid under the weaker Assumptions~A1--A4 stated in Section~\ref{sec:mathematical-analysis}, which provide the basis for the subsequent analysis.

We consider the linear model
\begin{equation}
    \bm{y} = \bm{A}\bm{x} + \bm{v}
    \label{eq:linear-model}
\end{equation}
where the observation $\bm{y}\in \mathbb{R}^N$ is a linear combination of $K<N$ out of $M$ (with $M \gg N$) columns of the overcomplete dictionary matrix $\bm{A}=[\bm{a}_1\cdots\bm{a}_M] \in\mathbb{R}^{N\times M}$ with sparse weight vector $\bm{x}=[x_1 \ist \cdots \ist x_M]^\T\in\mathbb{R}^M$ embedded in additive noise $\bm{v}$, resulting in the likelihood $p(\bm{y}|\bm{x})$.
We assume the true number of nonzero weights $K$ as well as the values of the weights to be unknown.

\Gls{sbl} assumes the random vector $\bm{v}$ to be Gaussian with (known or unknown) covariance and considers a parameterized prior \gls{pdf} of $\bm{x}$ of the form $p(\bm{x};\bm{\gamma})=\prod_{i=1}^{M}p(x_i;\gamma_i)$ with
$p(x_i;\gamma_i)=\mathrm{N}(x_i;\,0,\,\gamma_i^{-1})$, $i=1,\dots,M$ and $\bm{\gamma}=[\gamma_1 \ist\cdots\ist \gamma_M]^\T \in \mathbb{R}_{>0}^M$, where $\mathbb{R}_{>0}=\{x\in \mathbb{R}:x>0\}$.
The hyperparameter vector $\bm{\gamma}$ is estimated as the maximizer of the marginal likelihood
\begin{align}
    \label{eq:marginal-likelihood-objective}
    L(\bm{\gamma})
    \triangleq
    \int p(\bm{y}|\bm{x}) \prod_{i=1}^{M} p(x_i;\gamma_i)\,\mathrm{d}\bm{x}\ist,
\end{align}
i.e.,
\begin{align}
    \hat{\bm{\gamma}}_{\text{SBL}}
    =
    \underset{\bm{\gamma}\in\mathbb{R}_{>0}^M}{\arg\max} \iist L(\bm{\gamma})\ist.
    \label{eq:marginal-likelihood}
\end{align}
The estimate $\hat{\bm{\gamma}}_{\text{SBL}}$ is used to compute an approximation of the posterior of the weight vector $\bm{x}$ according to
\begin{align} \label{eq:weights-posterior}
    p(\bm{x}|\bm{y};\hat{\bm{\gamma}}_{\text{SBL}}) \propto p(\bm{y}|\bm{x})p(\bm{x};\hat{\bm{\gamma}}_{\text{SBL}}).
\end{align}

In practice, the maximization stage in \eqref{eq:marginal-likelihood} is traditionally performed by means of the EM-algorithm.
Alternatively, it can be carried out via coordinate ascent by explicitly expressing the dependency of \eqref{eq:marginal-likelihood-objective} on each single entry of $\bm{\gamma}$, while keeping the other entries fixed. Leveraging this dependency leads to \gls{fsbl} \cite{tipping2003WAIS:FastMarginalSparseBayesian}. 
Our investigation precisely relies on this dependency which was thoroughly studied in \cite{faulNIPS2001:AnalysisSBL}. We reproduce it here in a more general form that matches our purpose.

Given $i=1,\ldots,M$ let $\bm{\gamma}_{\sim i}$ denote the $M-1$ dim. vector obtained by removing the $i$th component $\gamma_i$ in $\bm{\gamma}$.
For such $i$, $\ell_i: \gamma_i \mapsto\ell_i(\gamma_i)=L(\bm{\gamma})$, with $\bm{\gamma}_{\sim i}$ considered fixed, is the $i$th section of the marginal likelihood $L(\bm{\gamma})$.
We easily show from \eqref{eq:marginal-likelihood-objective} that
\begin{align}\label{eq:marginal-likelihood-single-gamma}
    \ell_i(\gamma_i) = \int f_i(x_i;\bm{\gamma}_{\sim i})p(x_i;\gamma_i)\,\mathrm{d} x_i
\end{align}
where
\begin{align}\label{eq:partly-marginalized-likelihood}
    f_i(x_i;\bm{\gamma}_{\sim i}) = \int p(\bm{y}|\bm{x})\prod_{j=1,j\neq i}^{M} p(x_j;\gamma_j)\,\mathrm{d}\bm{x}_{\sim i}
\end{align}
is obtained by integrating out all weights $x_j$, $j\neq i$ in \eqref{eq:marginal-likelihood-objective} with $\bm{x}_{\sim i}$ defined similarly to $\bm{\gamma}_{\sim i}$.
Here, \eqref{eq:marginal-likelihood-single-gamma} can be viewed as the expectation 
of $x_i\mapsto f_i(x_i;\bm{\gamma}_{\sim i})$ when $x_i$ is random with \gls{pdf} $p(x_i;\gamma_i)$.

In classical \gls{sbl} the integral \eqref{eq:marginal-likelihood-objective} can be solved analytically, see Section~\ref{sec:classical-example} for details. 
It is shown in \cite{faulNIPS2001:AnalysisSBL} that in this case $\ell_i(\gamma_i)$ either exhibits a single global maximum, or continuously increases and converges to some finite value as $\gamma_i\uparrow \infty$.
This leads to the update rule
\begin{align}
    \hat{\gamma}_i &=
    \begin{cases}
        \displaystyle  \arg\max_{\gamma_i>0}\iist\ell_i(\gamma_i)
        &; \text{ if the maximum exists} \\
        \infty &; \text{ otherwise}
    \end{cases}
    \label{eq:fsbl-updates}
\end{align}
for the $i$th hyperparameter in the \gls{fsbl} algorithm.
The condition in \eqref{eq:fsbl-updates} can be checked analytically, see Section~\ref{sec:classical-example} or \cite{faulNIPS2001:AnalysisSBL} for details.
Note that as $\gamma_i\uparrow\infty$, the prior $p(x_i;\gamma_i)$ approaches a Dirac-delta distribution at zero, effectively forcing the estimate of the weight $x_i$ to zero according to \eqref{eq:weights-posterior}, and thus pruning the $i$th component $\bm{a}_i x_i$ in \eqref{eq:linear-model}.
We refer to \eqref{eq:fsbl-updates} as the pruning condition of \gls{fsbl}.

\section{Analysis of the Marginal Likelihood Under General Scale-Mixture Priors}
\label{sec:mathematical-analysis}

In this section we analyze the graph of $\ell_i(\gamma_i)$ in \eqref{eq:marginal-likelihood-single-gamma} with a focus on conditions which result in $\hat{\gamma}_i=\infty$ in \eqref{eq:fsbl-updates}, i.e., in the pruning of the $i$th component $\bm{a}_i x_i$. 
In this study, we relax the Gaussian assumption on the noise vector and the prior \glspl{pdf} $p(x_i;\gamma_i)$, $i=1,\ldots,M$ underlying classical \gls{sbl} and instead adopt the weaker assumptions listed below.
Note that under the latter all expressions \eqref{eq:marginal-likelihood-objective}--\eqref{eq:fsbl-updates} remain valid.

To alleviate the notation in this section we omit the index $i$ and write $f(x)=f(x_i;\hat{\bm{\gamma}}_{\sim i})$, $p(x;\gamma)=p(x_i;\gamma_i)$ and $\ell(\gamma)=\ell_i(\gamma)$.

The analysis relies on the following assumptions:
\begin{enumerate}
    \item[A1:] $p(x;\gamma)$ is even and belongs to a scale family of distributions with variance $\gamma^{-1}$, i.e., $p(x;\gamma)=p(-x;\gamma)=\gamma^{1/2} p(\gamma^{1/2}x;1)$, $\forall$ $\gamma \in \mathbb{R}_{ > 0}$ and $\int x^2 p(x;\gamma)\,\mathrm{d}x = \gamma^{-1}$.
    \item[A2:] The fourth moment of $p(x;1)$ is finite, i.e., $\int x^4 p(x;1)\,\mathrm{d}x < \infty$.
    	\footnote{
        It follows from Ljapounow's inequality that $p(x;1)$, and therefore $p(x;\gamma)$ for any $\gamma\in\mathbb{R}_{>0}$ by A2 have finite variance.}
    \item[A3:] $f(x)$ is four times continuously differentiable and $\lim_{\vert x\vert\uparrow\infty}f(x)=0$.
    \item[A4:] We assume that $f^{(4)}(x)$ fulfills the conditions that allow for interchanging limit and integral when we take the limit of \eqref{eq:theorem-divergence-proof-2} as $\gamma\uparrow\infty$, i.e.
    \begin{equation} \label{eq:lim_int_permutation}
        \hspace*{-5ex}
        \lim_{\gamma\uparrow\infty}\int\! R_3(x)p(x;\gamma)\,\mathrm{d}x= 
        \int\!\big(\lim_{\gamma\uparrow\infty}R_3(x)p(x;\gamma)\big)\,\mathrm{d}x
    \end{equation}
    where $R_3(x)$ is the remainder of the third-order Taylor polynomial of $f(x)$ at $0$, see \eqref{eq:Taylor-series}.
\end{enumerate}

Taylor's theorem \cite[Theorem 5.15]{rudin1976} states that we can write any function $g(x)$ that is $k$-times differentiable at $x=0$ as
\begin{equation}\label{eq:Taylor-series}
    g(x)=\sum_{n=0}^{k}\frac{g^{(n)}(0)}{n!}x^n + R_k(x)
\end{equation}
where $g^{(n)}(0)$ with $n=0,\ldots,k$ denotes the $n$th derivative of $g(x)$ at $0$. The sum in \eqref{eq:Taylor-series} is the $k$th order Taylor polynomial of $g(x)$ at $0$
and $R_k(x)$ is the remainder or error term that results from approximating $g(x)$ with this polynomial.

We first show the following lemma.
\begin{lemma}
    \label{lemma:marginal-likelihood-moments}
    Under Assumptions A1-A4 the function
    $\ell(\gamma)$, see \eqref{eq:marginal-likelihood-single-gamma}, can be written as
    \begin{equation}
        \label{eq:marginal-likelihood-moments}
        \ell(\gamma) = 
        \int_{-\infty}^{\infty}f(x)p(x;\gamma)\,\mathrm{d}x  = f(0) + \frac{f^{(2)}(0)}{2}\gamma^{-1} + o(\gamma^{-2})
    \end{equation}
    as $\gamma\uparrow\infty$. In particular, $\lim_{\gamma\uparrow\infty}\ell(\gamma)=f(0)$.
\end{lemma}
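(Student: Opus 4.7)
The strategy is to substitute a Taylor expansion of $f$ at $x=0$ into the integral defining $\ell(\gamma)$ and to exploit the scale-family structure of $p(x;\gamma)$ to evaluate the resulting moment integrals. The assumptions A1--A4 have been chosen so that every step of this computation is legitimate.

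First I would apply Taylor's theorem \eqref{eq:Taylor-series} to $g=f$ with $k=3$, which is admissible by A3 (four times continuous differentiability). This gives
\begin{equation*}
    f(x)=f(0)+f^{(1)}(0)\,x+\frac{f^{(2)}(0)}{2}x^2+\frac{f^{(3)}(0)}{6}x^3+R_3(x).
\end{equation*}
Substituting into \eqref{eq:marginal-likelihood-single-gamma} and using linearity of the integral yields $\ell(\gamma)$ as a linear combination of the first four moments of $p(x;\gamma)$ plus the remainder integral $\int R_3(x)p(x;\gamma)\,\mathrm{d}x$, which the paper seems to label as equation \eqref{eq:theorem-divergence-proof-2}.

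Next I would evaluate the moments. The scale-family identity $p(x;\gamma)=\gamma^{1/2}p(\gamma^{1/2}x;1)$ from A1, combined with the substitution $u=\gamma^{1/2}x$, shows that $\int x^n p(x;\gamma)\,\mathrm{d}x=\gamma^{-n/2}\mu_n$ with $\mu_n\triangleq\int u^n p(u;1)\,\mathrm{d}u$. The evenness part of A1 forces $\mu_1=\mu_3=0$, while A1 also gives $\mu_2=1$ (the variance of $p(x;1)$) and A2 guarantees $\mu_4<\infty$. The two surviving terms are therefore $f(0)$ and $\tfrac{1}{2}f^{(2)}(0)\gamma^{-1}$.

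It remains to bound the remainder integral. Performing the same change of variables gives
\begin{equation*}
    \gamma^{2}\!\int R_3(x)p(x;\gamma)\,\mathrm{d}x=\int \gamma^{2}R_3(\gamma^{-1/2}u)\,p(u;1)\,\mathrm{d}u.
\end{equation*}
Because $R_3(x)/x^4\to f^{(4)}(0)/24$ as $x\to 0$ (Taylor's theorem again, using A3), the integrand converges pointwise to $\tfrac{1}{24}f^{(4)}(0)u^4 p(u;1)$. Assumption A4 is precisely what is needed to exchange limit and integral, so the right-hand side tends to $\tfrac{1}{24}f^{(4)}(0)\mu_4$, a finite quantity by A2. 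Consequently $\int R_3(x)p(x;\gamma)\,\mathrm{d}x$ has the order announced in \eqref{eq:marginal-likelihood-moments}, and the ``in particular'' statement $\lim_{\gamma\uparrow\infty}\ell(\gamma)=f(0)$ follows by letting $\gamma\uparrow\infty$ in the full expansion.

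The only genuinely delicate point is the last one: passing the $\gamma$-limit inside the integral of the Taylor remainder. This is exactly what A4 is designed to handle, so once A4 is invoked the rest of the argument reduces to the routine moment computation sketched above.
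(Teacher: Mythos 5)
Your proposal is correct and follows essentially the same route as the paper: a third-order Taylor expansion, evaluation of the moments via the scale-family identity and evenness, and a change of variables $u=\gamma^{1/2}x$ in the remainder integral followed by the limit interchange granted by A4 (the paper phrases the pointwise limit via the Lagrange form $R_3(x)=\tfrac{f^{(4)}(\xi(x))}{4!}x^4$ rather than $R_3(x)/x^4\to f^{(4)}(0)/24$, but this is the same fact). The only quibble — that the limiting value $\tfrac{1}{24}f^{(4)}(0)\mu_4$ actually gives $O(\gamma^{-2})$ rather than $o(\gamma^{-2})$ unless $f^{(4)}(0)=0$ — is present in the paper's own proof as well and is harmless for the lemma's conclusions.
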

Here, $o(\cdot)$ stands for the little-$o$ notation.
\begin{proof}
    According to the first part of Assumption~A3, we can apply \eqref{eq:Taylor-series} with $k=3$ to $f(x)$. 
    Inserting into \eqref{eq:marginal-likelihood-single-gamma} yields
    \begin{align}\label{eq:expanding_l}
        \ell(\gamma) = \sum_{n=0}^{3} \frac{f^{(n)}(0)}{n!} \int x^n p(x;\gamma)\,\mathrm{d}x + \int R_3(x)p(x;\gamma)\,\mathrm{d}x
        .
    \end{align}
    Obviously, $\int x^{0} p(x;\gamma)\,\mathrm{d}x=1$. 
    Since $p(x;\gamma)$ is even, $\int x^n p(x;\gamma)\,\mathrm{d}x=0$ for all $n$ odd. Under Assumption~A1,
    $\int x^2 p(x;\gamma)\,\mathrm{d}x=\gamma^{-1}$. In Appendix~\ref{sec:appendix:lemma-proof} 
    we show that under rather weak sufficient conditions $\int R_3(x)p(x;\gamma)\,\mathrm{d}x=o(\gamma^{-2})$ as $\gamma\uparrow\infty$.
\end{proof}

\subsubsection*{Remark 1} Lemma~\ref{lemma:marginal-likelihood-moments} can be generalized in two directions by suitably relaxing and adjusting Assumptions~A1--A4. First, we can consider any arbitrary (i.e., non-even) zero-mean scale family $p(x;\gamma)$ with variance $\gamma^{-1}$ (by considering a Taylor polynomial of order two in \eqref{eq:Taylor-series}).
Second, we can consider the case where the first non-vanishing derivative of $f(x)$ at $x=0$ has order $2K$, i.e., $f^{(2n)}(0)=0$ for $n=1,\dots,K-1$ and $f^{(2K)}(0)>0$ (by using the $(2K+1)$th Taylor polynomial of $f(x)$ at $0$).
\medskip

The following theorem states a sufficient condition to obtain the estimate $\hat{\gamma}=\infty$ in \eqref{eq:fsbl-updates}.
To formulate it we apply \eqref{eq:Taylor-series} with $k=1$ to $f(x)$, yielding $f(x)=t(x) + R_1(x)$, where the first order Taylor polynomial $t(x)=f(0)+f^{(1)}(0)x$ is the tangent to $f(x)$ at $x=0$. Hence, $R_1(x)$ is the difference between $f(x)$ and its tangent $t(x)$ at $0$, as illustrated in Figure~\ref{fig:likelihood functions}. Finally, we define $\bar{R}_1(x)=R_1(x)+R_1(-x)$.

\begin{theorem}\label{theorem:divergence}
	The update rule \eqref{eq:fsbl-updates} returns
	$\hat{\gamma}=\infty$ if
	\begin{align}
		\label{eq:theorem-divergence}
		\bar{R}_1(x) < 0 \quad \text{for all}\ x \in \mathbb{R}_{>0}
        \,.
	\end{align}
\end{theorem}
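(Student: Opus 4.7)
The plan is to use the Taylor decomposition with $k=1$ together with the evenness of $p(x;\gamma)$ to rewrite $\ell(\gamma)$ in terms of $\bar R_1$, and then combine strict negativity of this integrand with the already-proved limit in Lemma~\ref{lemma:marginal-likelihood-moments} to rule out the existence of a finite maximizer.

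Concretely, I would first write $f(x)=f(0)+f^{(1)}(0)\,x+R_1(x)$ and substitute into \eqref{eq:marginal-likelihood-single-gamma}. Because $p(x;\gamma)$ is even (Assumption~A1) with total mass one, the constant term contributes $f(0)$ and the linear term vanishes, leaving
\begin{equation}
    \ell(\gamma) = f(0) + \int_{-\infty}^{\infty} R_1(x)\,p(x;\gamma)\,\mathrm{d}x.
\end{equation}
Splitting the integral at $0$, substituting $x\mapsto -x$ on the negative half-line, and again using $p(-x;\gamma)=p(x;\gamma)$, I collapse the two halves into
\begin{equation}
    \ell(\gamma) = f(0) + \int_{0}^{\infty} \bar R_1(x)\,p(x;\gamma)\,\mathrm{d}x.
\end{equation}

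Under the hypothesis \eqref{eq:theorem-divergence}, the integrand is strictly negative wherever $p(x;\gamma)>0$. Since $p(\cdot\,;\gamma)$ is a scale-family density of unit mass and hence satisfies $\int_0^\infty p(x;\gamma)\,\mathrm{d}x=1/2>0$, the integral is strictly negative for every $\gamma\in\mathbb{R}_{>0}$, so $\ell(\gamma)<f(0)$ throughout $\mathbb{R}_{>0}$. On the other hand, Lemma~\ref{lemma:marginal-likelihood-moments} gives $\lim_{\gamma\uparrow\infty}\ell(\gamma)=f(0)$. Therefore $\sup_{\gamma>0}\ell(\gamma)=f(0)$ but this supremum is not attained at any finite $\gamma$; by the definition of the update rule \eqref{eq:fsbl-updates} this forces $\hat\gamma=\infty$.

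The only delicate point is the step from ``$\bar R_1<0$ pointwise'' to ``the integral is strictly negative'': one must argue that the density $p(x;\gamma)$ cannot vanish almost everywhere on $(0,\infty)$. This is immediate from A1, since $\int_0^\infty p(x;\gamma)\,\mathrm{d}x=1/2$, so no additional regularity or support assumption is needed. Everything else is an elementary manipulation of the expectation expression for $\ell(\gamma)$ and a direct application of the previously established Lemma.
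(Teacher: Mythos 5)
Your proposal is correct and follows essentially the same route as the paper: the first-order Taylor decomposition, folding the integral onto $(0,\infty)$ via evenness to get $\ell(\gamma)=f(0)+\int_0^\infty \bar R_1(x)p(x;\gamma)\,\mathrm{d}x$, and then combining strict negativity with $\lim_{\gamma\uparrow\infty}\ell(\gamma)=f(0)$ to conclude the supremum is not attained. Your extra remark that $p(\cdot;\gamma)$ has mass $1/2$ on $(0,\infty)$ is a small but welcome refinement of the paper's terser ``since $p(x;\gamma)\geq 0$'' justification of the strict inequality.
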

\begin{proof}
Since the prior $p(x;\gamma)$ has mean zero, applying \eqref{eq:Taylor-series} with $k=1$ to $f(x)$ and inserting into \eqref{eq:marginal-likelihood-single-gamma} yields
\begin{align}\label{eq:theorem-divergence-proof-2}
	\ell(\gamma) = f(0) + \int_{-\infty}^{\infty}R_1(x)p(x;\gamma)\,\mathrm{d}x\ist .
\end{align}
The integral in \eqref{eq:theorem-divergence-proof-2} can be recast as $\int_{0}^{\infty} \bar{R}_1(x)p(x;\gamma)\,\mathrm{d}x$ since $p(x;\gamma)$ is even. Finally, since $p(x;\gamma)\geq 0$ it follows that $\ell(\gamma) < f(0)$ for all $\gamma \in \mathbb{R}_{>0}$ if \eqref{eq:theorem-divergence} holds. As a result, $\sup_{\gamma > 0} \ell(\gamma)=\lim_{\gamma\uparrow\infty}\ell(\gamma)=f(0)$, i.e., $\ell(\gamma)$ has no maximum.
\end{proof}

We also give a condition that is sufficient to obtain a finite estimate $\hat{\gamma}$ in \eqref{eq:fsbl-updates}.
\begin{theorem}
\label{theorem:convergence}
The update rule \eqref{eq:fsbl-updates} yields a finite estimate $\hat{\gamma}$ if
\begin{align}\label{eq:theorem-convergence}
f^{(2)}(0)&>0
\,.
\end{align}
\end{theorem}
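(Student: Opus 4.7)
The plan is to use Lemma~\ref{lemma:marginal-likelihood-moments} to show that the supremum of $\ell$ strictly exceeds both of its boundary limits, so that continuity of $\ell$ on $(0,\infty)$ forces the supremum to be attained at some finite positive $\gamma$. Starting from the expansion $\ell(\gamma) = f(0) + \frac{f^{(2)}(0)}{2}\gamma^{-1} + o(\gamma^{-2})$ as $\gamma\uparrow\infty$, the hypothesis $f^{(2)}(0)>0$ makes the $\gamma^{-1}$ term strictly positive and asymptotically dominant over the $o(\gamma^{-2})$ remainder. Hence there exists some large $\gamma^\ast\in(0,\infty)$ with $\ell(\gamma^\ast) > f(0) = \lim_{\gamma\uparrow\infty}\ell(\gamma)$, so the supremum is not approached as $\gamma\uparrow\infty$.

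To control the opposite boundary, I would use the scale-family structure in A1 to perform the change of variables $u=\gamma^{1/2}x$ in \eqref{eq:marginal-likelihood-single-gamma}, giving $\ell(\gamma) = \int f(\gamma^{-1/2}u)\,p(u;1)\,\mathrm{d}u$. Since $f$ is continuous with $\lim_{|x|\uparrow\infty}f(x)=0$ by A3 it is in particular bounded, and for $u\neq 0$ the integrand converges pointwise to $0$ as $\gamma\downarrow 0$; dominated convergence then yields $\lim_{\gamma\downarrow 0}\ell(\gamma)=0$. Because $f$ is a partly marginalized likelihood (see \eqref{eq:partly-marginalized-likelihood}), it is non-negative, so $\ell(\gamma^\ast)>f(0)\geq 0 = \lim_{\gamma\downarrow 0}\ell(\gamma)$ and the supremum is also not approached at zero.

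Combining both boundary estimates with the continuity of $\ell$ on $(0,\infty)$ (which follows from A3 and standard continuity-under-the-integral arguments), there exist $0<\delta<M<\infty$ such that $\ell(\gamma)<\ell(\gamma^\ast)$ on $(0,\delta]\cup[M,\infty)$. Consequently the supremum over $(0,\infty)$ coincides with the supremum over the compact interval $[\delta,M]$ and is therefore attained. The update rule \eqref{eq:fsbl-updates} then falls into its first branch and returns this finite maximizer.

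The hard part is the $\gamma\downarrow 0$ side: Assumptions A1--A4 do not on their own rule out the supremum being approached there, so one has to combine the scale-family form (A1), the vanishing of $f$ at infinity (A3), and the non-negativity of $f$ inherited from its definition \eqref{eq:partly-marginalized-likelihood}. In contrast, the $\gamma\uparrow\infty$ side, which is the novel analytical content, falls out almost immediately from Lemma~\ref{lemma:marginal-likelihood-moments} once the sign of $f^{(2)}(0)$ is fixed.
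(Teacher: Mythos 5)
Your proposal is correct and follows essentially the same route as the paper's proof: establish $\lim_{\gamma\downarrow 0}\ell(\gamma)=0$ (the paper does this via dominated convergence as well, writing the limit as $f(0)P(0;1)$ with $P(0;1)=0$), use Lemma~\ref{lemma:marginal-likelihood-moments} to show that $\ell(\gamma)\to f(0)$ from above when $f^{(2)}(0)>0$, and conclude by continuity that a finite maximizer exists. Your version is marginally more careful in spelling out the compactness step and in handling the boundary comparison when $f(0)$ could be zero, but the underlying argument is identical.
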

\begin{proof}
    Under Assumption~A3, it follows by invoking Lebesgue's Dominated Convergence Theorem that $\lim_{\gamma\downarrow 0}\ell(\gamma)=f(0)P({0};1)< f (0)$ with $P(0;1)=\int_0^0 p(x;1)\,\mathrm{d}x<1$ denoting the probability of the random variable $x$ with \gls{pdf} $p(x;1)$ to be equal to $0$.
    Note that the last strict inequality follows from Assumption~A1. 
    From \eqref{eq:marginal-likelihood-moments} in Lemma~\ref{lemma:marginal-likelihood-moments} we have $\lim_{\gamma\uparrow\infty}\ell(\gamma)=f(0)$.
    Assuming $f^{(2)}(0) \neq 0$, the second term $\frac{f^{(2)}(0)}{2}\gamma^{-1}$
    in \eqref{eq:marginal-likelihood-moments} approaches zero at a slower rate than the third term $o(\gamma^{-2})$. Therefore, $\lim_{\gamma\uparrow\infty}\ell(\gamma)=f(0)$ is approached from above if in addition $f^{(2)}(0) > 0$. Since $\lim_{\gamma\downarrow0}\ell(\gamma) < \lim_{\gamma\uparrow\infty}\ell(\gamma)$ and $\ell(\gamma)$ is continuous by Assumption~A4, $\ell(\gamma)$ must exhibit a maximum in $\mathbb{R}_{>0}$ in this case. See Remark~1 for the extension of this result to the case $f^{(2)}(0)=0$.
\end{proof}

\subsubsection*{Remark 2} Theorems~\ref{theorem:divergence} and \ref{theorem:convergence} are mutually exclusive. This can be shown by assuming that \eqref{eq:theorem-convergence} in Theorem~\ref{theorem:convergence} is true, i.e., $f^{(2)}(0)=\delta>0$. Since $f^{(2)}(x)$ is continuous by Assumption~A4, there must exist some $\epsilon > 0$ such that $|f^{(2)}(x)-f^{(2)}(0)|<\delta$ for $x\in I_\epsilon=(-\epsilon,\epsilon)$, yielding $f^{(2)}(x)>0$ for $x\in I_\epsilon$. Therefore, if \eqref{eq:theorem-convergence} is true then $f^{(2)}(x)$ is strictly convex on $I_{\epsilon}$ and $R_{1}(x)\geq 0$ for $x \in I_{\epsilon}$ (with equality only if $x=0$), violating \eqref{eq:theorem-divergence} from Theorem~\ref{theorem:divergence}.

\section{Classical SBL Example and Graphical Illustration}
\label{sec:example-and-interpretation}

To demonstrate the usefulness of the theoretical results stated above, we apply them to classical \gls{sbl} and show their relation to the pruning condition of \gls{fsbl}. Additionally, we provide a graphical perspective that illustrates these results.

\subsection{Classical SBL Example}
\label{sec:classical-example}
Let us assume that the noise vector $\bm{v}$ in \eqref{eq:linear-model} is Gaussian, white, i.e., its components are independent, zero-mean Gaussian, with known precision $\lambda$. In this case, the likelihood function reads $p(\bm{y}|\bm{x})=\mathrm{N}(\bm{y};\bm{A}\bm{x},\lambda^{-1}\bm{I})$.
Inserting the (classical) \gls{sbl} prior, i.e., $p(x_i;\gamma_i)=\mathrm{N}(x_i;0,\gamma_i^{-1})$, $i=1,\dots,M$, into \eqref{eq:partly-marginalized-likelihood} and solving analytically we obtain
\begin{align}
f_i(x_i;\hat{\bm{\gamma}}_{\sim i})
\propto
\mathrm{N}(x_i; \mu_i, \sigma_i^2)
\, ,
\end{align}
where
$\mu_i = \sigma_i^{2} \lambda \bm{a}_i^\T \bm{M}_{\sim i}\bm{y}$,
$\sigma_i^2 = (\lambda \bm{a}_i^\T\bm{M}_{\sim i}\bm{a}_i)^{-1}$, with $\bm{M}_{\sim i}=\bm{I}-\lambda\bm{A}_{\sim i}(\lambda \bm{A}_{\sim i}^\T \bm{A}_{\sim i} + \mathrm{diag}(\hat{\bm{\gamma}}_{\sim i}))^{-1}\bm{A}_{\sim i}$. In this expression, $\bm{A}_{\sim i}$ denotes $\bm{A}$ with the $i$th column removed.

Taking the second derivative of $f_i(x_i;\hat{\bm{\gamma}}_{\sim i})\propto \mathrm{N}(x_i;\mu_i,\sigma_i^2)$, it can be shown that \eqref{eq:theorem-convergence} from Theorem~\ref{theorem:convergence} is equivalent to 
\begin{align}
|\mu_i| > \sigma_i
\label{eq:gaussian-convergence-criterion}
\end{align}
using some algebraic manipulations.
Hence, According to Theorem~\ref{theorem:convergence}, the update \eqref{eq:fsbl-updates} yields a finite estimate $\hat{\gamma}_i$ if \eqref{eq:gaussian-convergence-criterion} holds.
Furthermore, we show in Appendix~\ref{sec:appendix-theorem-strict-gaussian} that the inverse of \eqref{eq:gaussian-convergence-criterion}, i.e., $|\mu_i|\leq \sigma_i$, is equivalent to \eqref{eq:theorem-divergence} in Theorem~\ref{theorem:divergence}.
Thus, Condition \eqref{eq:gaussian-convergence-criterion} is necessary and sufficient for \eqref{eq:fsbl-updates} to return a finite estimate $\hat{\gamma}_i$.

Using some algebraic manipulations, it can be easily verified that Condition \eqref{eq:gaussian-convergence-criterion} is equivalent to the pruning condition of \gls{fsbl}, see \cite[Eq. (17)]{faulNIPS2001:AnalysisSBL}, \cite[Eq. (30)]{GreLeiWitFle:TWC2024} with $\kappa=1$, \cite[Eq. (11)]{shutin2011TSP:fastVSBL}, or \cite[Lemma 3.1]{AmeGomICML2021:SBLStewpwiseRegression}.
The parameters $\rho_i$ and $\zeta_i$ in \cite{GreLeiWitFle:TWC2024} coincide with $\mu_i$ and $\sigma_i^2$, respectively.
Hence, the threshold $\kappa$ introduced in \cite{GreLeiWitFle:TWC2024} can be interpreted as requiring the absolute value of the mean of $\mathrm{N}(x_i;\, \mu_i,\sigma_i^2)$ to be $\sqrt{\kappa}$ times the standard deviation for the respective weight to be nonzero.

\subsection{Graphical Perspective}
\label{sec:graphical-interpretation}

In this subsection, we again alleviate the notation by omitting the index $i$ and write $f(x)=f(x_i;\hat{\bm{\gamma}}_{\sim i})$, $p(x;\gamma)=p(x_i;\gamma_i)$, $\ell(\gamma)=\ell_i(\gamma)$, $\mu=\mu_i$, and $\sigma=\sigma_i$.

Figure~\ref{fig:likelihood functions} illustrates the function $f(x)\propto \mathrm{N}(x;\mu,\sigma^2)$ obtained in classical \gls{sbl} for Case (a) $|\mu|>\sigma$, and (b) $|\mu|<\sigma$. Specifically, Case (a) shows $f(x)\propto \mathrm{N}(x;1.5,1)$ and Case (b) shows $f(x)\propto \mathrm{N}(x;0.5,1)$.
In Case (a), We find that $f^{(2)}(x)>0$ for $x\in I_{0.5}=(-0.5,0.5)$, fulfilling \eqref{eq:theorem-convergence} in Theorem~\ref{theorem:convergence}. Furthermore, the function $f(x)$ is strictly convex on $I_{0.5}$, yielding $R_1(x)\geq 0$ for $x\in I_{0.5}$ violating \eqref{eq:theorem-divergence} in Theorem~\ref{theorem:divergence}.
In Case (b) we find $f^{(2)}(x)<0$ for $x\in I_{0.5}$ and $f(x)$ is concave on $I_{0.5}$, violating \eqref{eq:theorem-convergence} in Theorem~\ref{theorem:convergence}, while it is shown in Appendix~\ref{sec:appendix-theorem-strict-gaussian} that in this case \eqref{eq:theorem-divergence} in Theorem~\ref{theorem:divergence} is fulfilled.

\begin{figure}
\centering
\includegraphics{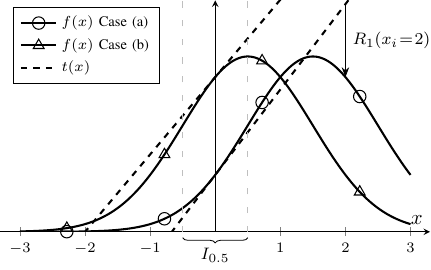}
\caption{Illustration of the graphs of $f(x)$ and its tangent $t(x)$ at $x=0$. In Case (a) $f(x)\propto\mathrm{N}(x;\,1.5,1)$ which is strictly convex on $I_{0.5}=(-0.5,0.5)$; in Case (b) $f(x)\propto\mathrm{N}(x;\, 0.5,1)$ which is strictly concave on $I_{0.5}$.
}
\label{fig:likelihood functions}
\end{figure}

To provide additional insights into Theorems~\ref{theorem:divergence} and \ref{theorem:convergence}, we further analyze the behavior of $\ell(\gamma)$ as $\gamma\uparrow\infty$ based on whether $f(x)$ is strictly convex/concave on a non-degenerate symmetric interval around zero, say $I_a=(-a,a)$ with $a>0$.
Note that the probability measure of $p(x;\gamma)$ outside the interval $I_a$ can be made arbitrarily small by choosing $\gamma$ sufficiently large.
Together with the fact that $f(x)p(x;\gamma)$ is integrable, see \eqref{eq:marginal-likelihood-single-gamma}, this allows us to restrict the following graphical discussion only to the interval $I_a$.
If $f(x)$ is strictly convex on $I_a$, such as in Case (a) for $I_{0.5}$, the tangent $t(x)$ is a lower bound on $f(x)$ and $\ell(\gamma)$ is lower bounded by $\int t(x)p(x;\gamma) \,\mathrm{d}x=f(0)$. Moreover, the difference $R_1(x)$ increases as $|x|$ increases due to the strict convexity of $f(x)$. Figure~\ref{fig:marignal-integral} illustrates $f(x)\propto \mathrm{N}(x;1.5,1)$ of Case (a) and the prior $p(x;\gamma)$ for two different values of $\gamma$ with $\gamma^{[1]}<\gamma^{[2]}$. A smaller value of $\gamma$ means that the prior $p(x;\gamma)$ is spread out over a larger region and, therefore, puts more weight on larger values of $|x|$ where $R_1(x)$ is larger. On the other hand, as $\gamma\uparrow\infty$, $p(x;\gamma)$ approaches a Dirac delta at zero and the value of the integral $\int R_1(x)p(x;\gamma)\,\mathrm{d}x$ approaches zero as well, yielding $\ell(\gamma)=f(0)$.
Hence, for $\gamma$ sufficiently large, $\ell(\gamma)$ is lower-bounded by $f(0)$ and reaches the lower bound $f(0)$ from above as $\gamma\uparrow\infty$. The converse is true if $f(x)$ is concave on $I_a$, such as in Case (b) for $I_{0.5}$. In this case, $t(x)$ is an upper bound for on $f(x)$ such that $\ell(\gamma)$ is increasing and reaches the bound $f(0)$ from below as $\gamma\uparrow\infty$.

\begin{figure}
\centering
\includegraphics{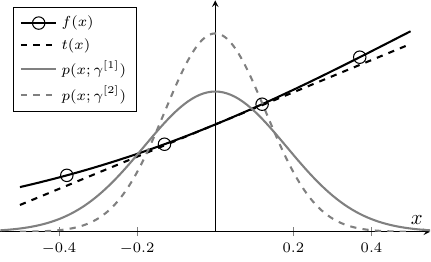}
\caption{Graph of the functions $f(x)$, $t(x)$, and $p(x;\gamma)$ for two values $\gamma^{[1]}$ and $\gamma^{[2]}$ of $\gamma$ such that $p(x;\gamma^{[1]})$ has most of its mass concentrated within the interval $I_{0.5}=(-0.5,0.5)$, $\gamma^{[1]} < \gamma^{[2]}$, and $f(x)$ is strictly convex on $I_{0.5}$.
}
\label{fig:marignal-integral}
\end{figure}

\section{Conclusion and Outlook}
We analyze the pruning condition in \gls{fsbl} when the Gaussian assumptions underlying the derivation of the algorithm are weakened to A1-A4 given in Section~\ref{sec:mathematical-analysis}.

By investigating the Taylor reminder $R_1(x)$, i.e., the difference between $f_i(x_i;\hat{\bm{\gamma}}_{\sim i})$ and its tangent at zero, we derive Theorem~\ref{theorem:divergence}, which states a sufficient condition for obtaining $\hat{\gamma}_i=\infty$ in \eqref{eq:fsbl-updates}, i.e., for pruning the component $\bm{a}_i x_i$ from the model \eqref{eq:linear-model}.
Our second main result, Theorem~\ref{theorem:convergence}, states a sufficient condition for obtaining $\hat{\gamma}_i<\infty$, i.e., for including the $i$th component $\bm{a}_i x_i$ in the model \eqref{eq:linear-model}. This condition is that the second derivative of $f_i(x_i;\hat{\bm{\gamma}}_{\sim i})$ is strictly positive at zero, i.e., $f_i^{(2)}(0;\hat{\bm{\gamma}}_{\sim i})>0$.
We show that in the Gaussian case, i.e., for ``classical'' \gls{sbl}, Conditions \eqref{eq:theorem-divergence} and \eqref{eq:theorem-convergence} in Theorems~\ref{theorem:divergence} and \ref{theorem:convergence} are complementary and that they are also equivalent to the pruning condition of \gls{fsbl} and, thereby, offer some more insight on the internal functioning on this algorithm.

\appendix
\subsection{Proof of Lemma~\ref{lemma:marginal-likelihood-moments}}
\label{sec:appendix:lemma-proof}

Let us assume that Assumptions A1--A3 are satisfied. We show that if \eqref{eq:lim_int_permutation} is true, then
\begin{equation}
h(\gamma)=\int R_3(x)p(x;\gamma)\,\mathrm{d}x = o(\gamma^{-2}) \,\, \text{as}\,\, \gamma\uparrow\infty .
\end{equation}
We start by expressing the remainder $R_3(x)$ in its Lagrange form:
\begin{equation}
\label{eq:appendix:remainder-largrange-form}
R_3(x) = 
\left\{
\begin{array}{ccl}
	0 & ; & x=0  \\
	\frac{f^{(4)}(\xi(x))}{4!}x^4 & ; & x\neq 0 
\end{array}
\right.
\end{equation}
where $\xi(x) \in (0,x)$ is a well-defined function with domain $\mathbb{R}\!\setminus\!\{0\}$.
Inserting \eqref{eq:appendix:remainder-largrange-form} into the integral in \eqref{eq:lim_int_permutation}  and performing the change of variables $u=\sqrt{\gamma}x$ we obtain
\begin{align}\label{eq:function-h}
h(\gamma)
&=
\gamma^{-2} \frac{1}{4!}\int_{\mathbb{R}\setminus\{0\}}
f^{(4)}\big(\xi\big(\gamma^{-1/2}u
\big)\big) u^{4}p(u;1)\,\mathrm{d}u .
\end{align}
For any fixed $u\in\mathbb{R}\!\setminus\!\{0\}$, $\lim_{\gamma\uparrow\infty}\xi\big(\gamma^{-1/2}u\big)= 0$, i.e. the function $\xi\big(\gamma^{-1/2}u\big)$ converges pointwise to $0$ on its domain $\mathbb{R}\!\setminus\!\{0\}$.  
By Assumption A3, as $\gamma \uparrow \infty$ the function $f^{(4)}\big(\xi\big(\gamma^{-1/2}u\big)\big)$ converges pointwise to the constant function $f^{(4)}(0)$ on its domain $\mathbb{R}\!\setminus\!\{0\}$. Invoking Assumption A3, we write
\begin{align*}
\lim_{\gamma\uparrow\infty}
\int_{\mathbb{R}\setminus\{0\}}
f^{(4)}\big(\xi\big(\gamma^{-1/2}u\big)\big) u^{4}p(u;1)\,\mathrm{d}u =
& \\
& \hspace*{-20ex}
f^{(4)}(0) \int_{\mathbb{R}\setminus\{0\}} u^{4}p(u;1)\,\mathrm{d}u .
\end{align*}
By Assumption A2, the right-hand integral is finite. Thus, from \eqref{eq:function-h}, we conclude that $h(\gamma)=o(\gamma^{-2})$ as $\gamma\uparrow\infty$.

{\em Comment:} 
A sufficient condition for the identity in \eqref{eq:lim_int_permutation} to hold is that $f^{(4)}(x)$ is uniformly integrable with respect to the finite measure with density $x^4p(x;1)$ relative to the Lebesgue measure on $\mathbb{R}$ \cite[Theorem II.6.4]{shiryayev1984}. 
This is the case if $\vert f^{(4)}(x)\vert$ is bounded by a function that is integrable with respect to the former measure, in which case Lebesgue's Dominated Convergence Theorem \cite[Theorem II.6.3]{shiryayev1984} can be invoked. Note that the finiteness of said measure is postulated in Assumption A2.

\subsection{Application of Theorem~\ref{theorem:divergence} to the Gaussian Case}
\label{sec:appendix-theorem-strict-gaussian}
Starting from a Gaussian \gls{pdf}
\begin{align}
f(x)=\mathrm{N}(x;\mu,\sigma^2)=\frac{1}{\sqrt{2\pi\sigma^2}}e^{-\frac{(x-\mu)^2}{2\sigma^2}}
\end{align}
we find
\begin{align}
f^\prime(x) &= \frac{\mu-x}{\sigma^2}\frac{1}{\sqrt{2\pi\sigma^2}}e^{-\frac{(x-\mu)^2}{2\sigma^2}} \\
t(x) &= \frac{1}{\sqrt{2\pi\sigma^2}}e^{-\frac{\mu^2}{2\sigma^2}}\Big[\frac{\mu x}{\sigma^2} + 1\Big]
\end{align}
and
\begin{align}
R_1(x) &= \frac{1}{\sqrt{2\pi\sigma^2}} e^{-\frac{\mu^2}{2\sigma^2}}\Big[ e^{-\frac{x^2-2\mu x}{2\sigma^2}} - \frac{\mu x}{\sigma^2} - 1\Big]
\end{align}
which yields
\begin{align}
\bar{R}_1(x)= \sqrt{\frac{2}{\pi\sigma^2}}e^{-\frac{\mu^2}{2\sigma^2}} \Big[ e^{-\frac{x^2}{2\sigma^2}} \cosh\Big(\frac{\mu x}{\sigma^2}\Big) -1 \Big]
\end{align}
after some algebraic manipulations. To determine the values of $\mu$ and $\sigma$ for which Equation~\eqref{eq:theorem-divergence} holds, we calculate
\begin{align}
\bar{R}_1^{\prime}(x) = \sqrt{\frac{2}{\pi\sigma^2}}\frac{1}{\sigma^2}e^{-\frac{\mu^2 + x^2}{2\sigma^2}} \big(k_1(x)-k_2(x)\big)
\end{align}
where we defined $k_1=\mu \sinh\big(\frac{\mu x}{\sigma^2}\big)$ and $k_2=x\cosh \big(\frac{\mu x}{\sigma^2}\big)$, and $(\cdot)^\prime$ denotes the derivative.
Analyzing the terms $k_1(x)$ and $k_2(x)$ for 
$x\in\mathbb{R}_{\geq0}$, 
we find that at $x=0$, both terms are zero, i.e., $x=0$ is a stationary point of $\bar{R}_1(x)$. Taking the derivatives of $k_1$ and $k_2$ yields
\begin{align}
k_1^\prime(x) &= \frac{\mu^2}{\sigma^2}\cosh \Big(\frac{\mu x}{\sigma^2}\Big) \\
k_2^\prime(x) &= \cosh \Big(\frac{\mu x}{\sigma^2}\Big) + \frac{\mu x}{\sigma^2} \sinh\Big(\frac{\mu x}{\sigma^2}\Big)
\, .
\end{align}
Therefore, if $\mu^2\leq\sigma^2$, then $k_2(x)$ grows faster than $k_1(x)$ for all $x\in\mathbb{R}_{>0}$ and $\bar{R}_1^\prime(x)<0$, i.e., $\bar{R}_1(x)$ is decreasing on $\mathbb{R}_{>0}$. Thus, $\bar{R}_1(0)=0$ is the global maximum of $\bar{R}_1(x)$ and \eqref{eq:theorem-divergence} is fulfilled.
On the other hand, if $\mu^2>\sigma^2$, then $\bar{R}_1(x)$ is increasing in  $(0,a)$ for some $a>0$, violating \eqref{eq:theorem-divergence}.

\bibliography{IEEEabrv,References_SBL_Graphical_Perspective}

\begin{thebibliography}{10}
\providecommand{\url}[1]{#1}
\csname url@samestyle\endcsname
\providecommand{\newblock}{\relax}
\providecommand{\bibinfo}[2]{#2}
\providecommand{\BIBentrySTDinterwordspacing}{\spaceskip=0pt\relax}
\providecommand{\BIBentryALTinterwordstretchfactor}{4}
\providecommand{\BIBentryALTinterwordspacing}{\spaceskip=\fontdimen2\font plus
\BIBentryALTinterwordstretchfactor\fontdimen3\font minus
  \fontdimen4\font\relax}
\providecommand{\BIBforeignlanguage}[2]{{%
\expandafter\ifx\csname l@#1\endcsname\relax
\typeout{** WARNING: IEEEtran.bst: No hyphenation pattern has been}%
\typeout{** loaded for the language `#1'. Using the pattern for}%
\typeout{** the default language instead.}%
\else
\language=\csname l@#1\endcsname
\fi
#2}}
\providecommand{\BIBdecl}{\relax}
\BIBdecl

\bibitem{baraniukSPM2007:CS}
R.~G. Baraniuk, ``Compressive sensing [lecture notes],'' \emph{{IEEE} Signal
  Process. Mag.}, vol.~24, no.~4, pp. 118--121, 2007.

\bibitem{DonohoTIT2006:CS}
D.~Donoho, ``Compressed sensing,'' \emph{{IEEE} Trans. Inf. Theory}, vol.~52,
  no.~4, pp. 1289--1306, Apr. 2006.

\bibitem{faulNIPS2001:AnalysisSBL}
A.~Faul and M.~Tipping, ``Analysis of sparse {Bayesian} learning,'' in
  \emph{Advances Neural Inf. Process. Syst.}, Vancouver, Canada, Dec. 3--8,
  2001, pp. 383--389.

\bibitem{GreLeiWitFle:TWC2024}
S.~Grebien, E.~Leitinger, K.~Witrisal, and B.~H. Fleury, ``Super-resolution
  estimation of {UWB} channels including the dense component -- {An
  SBL}-inspired approach,'' \emph{{IEEE} Trans. Wireless Commun.}, vol.~23,
  no.~8, pp. 10\,301--10\,318, Feb. 2024.

\bibitem{gerstoftSPL2016:SBLforDoA}
P.~Gerstoft, C.~F. Mecklenbräuker, A.~Xenaki, and S.~Nannuru, ``Multisnapshot
  sparse {Bayesian} learning for {DOA},'' \emph{{IEEE} Signal Process. Lett.},
  vol.~23, no.~10, pp. 1469--1473, Oct. 2016.

\bibitem{moederlFusion2025:multi-dictionary-SBL}
J.~M\"oderl, A.~M. Westerkam, A.~Venus, and E.~Leitinger, ``A block-sparse
  {Bayesian} learning algorithm with dictionary parameter estimation for
  multi-sensor data fusion,'' in \emph{2025 28th Int. Conf. Inf. Fusion
  (FUSION)}, Rio de Janeiro, Brazil, Jul. 7--11, 2025.

\bibitem{hansenTSP2018:IterativeReceiverSBL}
T.~L. Hansen, P.~B. J{\o}rgensen, M.-A. Badiu, and B.~H. Fleury, ``An iterative
  receiver for {OFDM} with sparsity-based parametric channel estimation,''
  \emph{{IEEE} Trans. Signal Process.}, vol.~66, no.~20, pp. 5454--5469, Oct.
  2018.

\bibitem{dorazilICASSP2023}
J.~Dorazil, B.~H. Fleury, and F.~Hlawatsch, ``{Bayesian} methods for optical
  flow estimation using a variational approximation, with applications to
  ultrasound,'' in \emph{2023 {IEEE} Int. Conf. Acoust., Speech and Signal
  Process.}, Rhodos, Greece, Jun. 04--10, 2023, pp. 1--5.

\bibitem{tzikasTIP2009:SBL-blind-deconvolution}
D.~G. Tzikas, A.~C. Likas, and N.~P. Galatsanos, ``Variational {Bayesian}
  sparse kernel-based blind image deconvolution with {Student's-t} priors,''
  \emph{{IEEE} Trans. Image Process.}, vol.~18, no.~4, pp. 753--764, Apr. 2009.

\bibitem{wipfJMLR2014:blind-deconvolutions}
D.~Wipf and Z.~Haichao, ``Revisiting {Bayesian} blind deconvolution,'' \emph{J.
  Mach. Learn. Res.}, pp. 3775--3814, Nov. 2014.

\bibitem{seegerSPM2010:VBayesInference}
M.~W. Seeger and D.~P. Wipf, ``Variational {Bayesian} inference techniques,''
  \emph{{IEEE} Signal Process. Mag.}, vol.~27, no.~6, pp. 81--91, Oct. 2010.

\bibitem{moederl2023TSP:structuredLSE}
J.~M\"oderl, E.~Leitinger, F.~Pernkopf, and K.~Witrisal, ``Variational
  inference of structured line spectra exploiting group-sparsity,''
  \emph{{IEEE} Trans. Signal Process.}, pp. 1--14, 2024.

\bibitem{tipping2003WAIS:FastMarginalSparseBayesian}
M.~E. Tipping and A.~C. Faul, ``Fast marginal likelihood maximisation for
  sparse {Bayesian} models,'' in \emph{Proc. 9th Int. Workshop Artif. Intell.
  and Statist.}, vol.~R4, Key West, FL, USA, Jan. 03--06, 2003, pp. 276--283.

\bibitem{wipf2011TIP}
D.~P. Wipf, B.~D. Rao, and S.~Nagarajan, ``Latent variable {Bayesian} models
  for promoting sparsity,'' \emph{{IEEE} Trans. Inf. Theory}, vol.~57, no.~9,
  pp. 6236--6255, Sep. 2011.

\bibitem{palmerNIPS2005}
J.~Palmer, K.~Kreutz-Delgado, B.~Rao, and D.~Wipf, ``Variational {EM}
  algorithms for non-gaussian latent variable models,'' in \emph{Adv. Neural
  Inf. Process. Syst.}, vol.~18, Vancouver, Canada, Dec. 5--8, 2005, pp.
  1059--1066.

\bibitem{shutin2011TSP:fastVSBL}
D.~Shutin, T.~Buchgraber, S.~R. Kulkarni, and H.~V. Poor, ``Fast variational
  sparse {Bayesian} learning with automatic relevance determination for
  superimposed signals,'' \emph{{IEEE} Trans. Signal Process.}, vol.~59,
  no.~12, pp. 6257--6261, Dec. 2011, doi: 10.1109/TSP.2011.2168217.

\bibitem{AmeGomICML2021:SBLStewpwiseRegression}
S.~E. Ament and C.~P. Gomes, ``Sparse {Bayesian} learning via stepwise
  regression,'' in \emph{Proc. 38th Int. Conf. Mach. Learn.}, Jul. 18--24,
  2021.

\bibitem{rudin1976}
W.~Rudin, \emph{Principles of Mathematical Analysis}, ser. International series
  in pure and applied mathematics.\hskip 1em plus 0.5em minus 0.4em\relax
  McGraw-Hill, 1976.

\bibitem{shiryayev1984}
A.~N. Shiryayev, \emph{Probability}, ser. Graduate Texts in Mathematics;
  95.\hskip 1em plus 0.5em minus 0.4em\relax Springer, 1984.

\end{thebibliography}
\bibliographystyle{IEEEtran}

\end{document}